%
\documentclass{article} 

\usepackage{graphicx}
\usepackage{amsmath}
\usepackage{framed}
\usepackage{amssymb,amsthm}
\usepackage{comment}
\usepackage{ bbold }
\usepackage{xcolor}
\usepackage{url}

\newtheorem{theorem}{Theorem}


%

\begin{document}

\title{Online Multi-Agent Decentralized Byzantine-robust Gradient Estimation}
%
%
\author{Alexandre Reiffers-Masson,
Isabel Amigo}
%

%

\maketitle              

\begin{abstract}
In this paper, we propose an iterative scheme for distributed Byzantine-
resilient estimation of a gradient associated with a black-box model. Our algorithm is based on simultaneous perturbation, secure state estimation and two-timescale stochastic approximations. We also show the performance of our algorithm through numerical experiments. 
\end{abstract}
\section{Introduction}

The main goal of this paper is to derive a decentralized algorithm which can efficiently learn the gradient of a black-box model, in a multi-agent context. In a black-box model it is assumed that a function  $f$ is unknown but can be accessed through queries to a zero-th order oracle \cite{bubeck2015convex}. Being able to compute the gradient can be used, for instance to design efficient distributed optimization algorithms to find the minimum of $f$. We assume that there is a finite number of processors/servers (called nodes or agents in the rest of the paper) which participate in the distributed computation of the gradient. 

We also assume that some agents can have Byzantine behaviors: that is, they will try to deviate from the suggested protocol. Such behaviors are well known in the literature of distributed algorithms (consensus and leader election algorithms, for instance) and have also recently been studied in the context of machine learning \cite{chen2017distributed,yin2018byzantine,wu2020federated}. In such contexts, three major points need to be tackled: (1) The fact that unidentifiable Byzantines nodes are present in the system and that it is not possible to guarantee the safety of a given node; (2) the algorithms should be decentralized and each "good" node, should have a good estimation of the gradient (3) the algorithms should be efficient (in terms of speed of convergence and number of operations per iterations). 

Our solution is based on a generalization of the gradient estimator algorithm developed in \cite{borkar2017gradient}, using two-timescale stochastic approximations and secure estimation \cite{fawzi2011secure}.  We are also able to derive sufficient conditions to ensure that our algorithm will be able to be robust to Byzantine nodes. 






\section{Problem formulation}\label{sec:problem formulation}

This section is dedicated to the description of our framework and to the problem formulation. Further discussion on the model assumptions is provided in Subsection \ref{subsec:remarks}.

\paragraph{Nodes' model} Let $\mathcal{I}=\{1,\ldots,n\}$ be the set of nodes. We assume that the set of nodes can be partitioned into two disjoint subsets of nodes $\mathcal{I}_B$ and $\mathcal{I}_G$. The set $\mathcal{I}_B$ (resp. $\mathcal{I}_G$) is the set of Byzantine nodes (resp. non-Byzantine nodes).  At every instant $k\in\mathbb{N}$, each node $i \in \mathcal{I}$ controls a variable  $x_i(k)\in \mathbb{R}$. For every $k\in\mathbb{N}$, let $x(k):=[x_i(k)]_{1\leq i \leq n}$ be the associated vector. A Byzantine node will try to disobey the protocol. More precisely, we assume that: (1) a Byzantine node can modify its own $x_i(k)$ (when it is allowed) without following the prescribed protocol (2) it can report corrupted values (3) it can have complete knowledge of the system and the algorithm. Note that $\mathcal{I}_B$ is constant for every $k$, meaning that the set of Byzantine nodes does not change over time (this assumption is similar to the one in \cite{fawzi2011secure}).

\paragraph{Asynchronous Updates}
Let $Y(k)\subseteq \mathcal{I}$ be the set of active nodes at instant $k\in\mathbb{N}_+$. We say that a node $i$ is active at instant $k$ if it can modify $x_i$ at instant $k$.  Let $Y_B(k)\subseteq \mathcal{I}_B$ be the set of active Byzantine nodes and $Y_G(k)\subseteq \mathcal{I}_G$ is the set of active non-Byzantine nodes, such that $Y(k)=Y_B(k) \cup Y_G(k)$ and $Y_B(k)\cap Y_G(k)=\emptyset$ . We assume that $Y(k)$ is an irreducible Markov chain where the state space is a subset $\mathcal{U}$ of the power set $\mathcal{P}(\mathcal{I})$ of $\mathcal{I}$. This assumption ensures that we are in an asynchronous set-up, where not all nodes are active at the same time. This is crucial to ensure that Byzantine nodes do not have too much power in the system. We use a vector notation $u(k)\in\{0,1\}^{\mid\mathcal{I}\mid}$ to denote an element of $\mathcal{P}(\mathcal{I})$ at instant $k$. If the $i$-th component of $u(k)$ is equal to one, it means that node $i$ belongs to the set $Y(k)$ (i.e. it is active). Otherwise, node $i$ is not in the set $Y(k)$ (i.e. it is not active). Note that for a given $k$, we have $u(k)=[\mathbf{1}_{i\in Y(k)}]_{1\leq i \leq n}$. The subset of $\mathcal{U}$ where each vector contains at least a Byzantine node is denoted by $\mathcal{U}_B:=\{u\in\mathcal{U}\mid \exists i\in\mathcal{I}_B,\;u_i=1\}$.

\paragraph{Challenge} We assume that for every $k$, the nodes are interested in computing the gradient of $f(x(k))$ where $f:\mathbb{R}^n\rightarrow \mathbb{R}$ is a continuous twice differentiable function. We assume that for every $x(k)$, $f(x(k))$ is available to every node. However, $\nabla f(x(k))$ is not readily available, and therefore, the nodes need to collaborate together to obtain a robust estimation. We can easily generalize our approach to the case when $f:\mathbb{R}^n\rightarrow \mathbb{R}^{p}$, but for simplicity's sake, we assume $p=1$. 

\subsection{Remarks}\label{subsec:remarks}
We believe that the assumption that $Y(k)$ is an irreducible Markov chain, resulting in mathematical simplicity in our problem, is in line with acceptable models capturing asynchronous updates of the nodes. This assumption is necessary for the proof of the convergence of iterative scheme (see section \ref{sec:algo_num}) We justify below our assumption with a classical example coming from the distributed systems literature. A first classical communication scheme to ensure lightweight communication costs is a simple token-passing implementation of a random walk. 
Different implementations of such mechanism have been studied in distributed systems \cite{augustine2013storage,ramiro2014temporal}
and Byzantine versions of such communication scheme have been studied in \cite{augustine2015fast, yuan2019fast}. The performance of the different algorithms has been studied using Markov chain models. Another example are wireless networks. In this context, comes naturally the constraint that all links/nodes cannot be  activated simultaneously. A natural protocol in such case is the CSMA protocol which has been modeled using Markov chains \cite{jiang2009distributed,borkar2014asynchronous}.
\section{Byzantine-robust Gradient Estimation}\label{sec:byzantine estimation}

We first describe a natural, not robust to Byzantine nodes, way to approximate the gradient. Then, by adapting the tools developed in \cite{fawzi2011secure}, we present how to construct a robust approximation of $\nabla f(x)$. 

\subsection{Simultaneous Perturbation} We now introduce the simultaneous perturbation scheme to create an estimate of the gradient. We will also illustrate the impact of having Byzantine nodes in such estimator. 

In a simultaneous perturbation scheme, at every instant $k$, for a given $x(k)$, we assume that every node $i\in Y_G(k)$ performs the following scheme:
\begin{enumerate}
    \item Sample $\Delta_i(k)\in\{-1,1\}$ from a Bernoulli distribution of parameter $1/2$,
    \item Play $x_i(k)+\delta \Delta_i(k)$, where $\delta$ is positive constant, provided as input.
\end{enumerate}
A  Byzantine node will not respect such steps. Without loss of generality, we assume that a Byzantine node, instead of playing  $x_i+\delta \Delta_i(k)$ plays $x_i+\delta e_i^1(k)$, with $e_i^1(k)\in\mathbb{R}$. Therefore, the observation by all nodes at instant $k$ is given by:
\begin{equation}
    z_{i,u(k)}(k)=\left\{\begin{array}{lll}
        \displaystyle \frac{f(x+\delta\tilde\Delta(k))-f(x)}{\delta\tilde\Delta_i(k)}, & \text{if} & i\in Y_G(k),\\
       0, & \text{if} & i \notin Y(k), \\
        \end{array}
    \right.
\end{equation}
where the $i$-th component of $\tilde\Delta(k)$ is equal to:
\begin{equation}
   \tilde\Delta_i(k)=\left\{\begin{array}{lc}
      \Delta_i(k),  &\text{if } i\in Y_G(k), \\
      e^1_i(k), & \text{if } i\in Y_B(k), \\
      0, & \text{if } i\notin Y(k). \\
   \end{array}\right.
\end{equation}
In expectation, and using Taylor's theorem (see chap. 10 p.120 in \cite{borkar2009stochastic}), the gradient observation,  $\overline{z}_{i,u(k)}(x):=E[z_{i,u(k)}(k)]$, can be summarized as follows:
\begin{equation}
    \overline{z}_{i,u(k)}(x):=\left\{\begin{array}{l}
        \displaystyle \frac{\partial f}{\partial x_i}(x)+O(\delta \|\nabla^2f(x)\|),  \\
        \;\;\;\;\;\;\;\;\text{ if }  i\in Y_G(k)\text{ and }\;Y_B(k)=\emptyset,\\\displaystyle
       \frac{\partial f}{\partial x_i}(x)+e_{u(k)}(k)+O(\delta \|\nabla^2f(x)\|),\\
        \;\;\;\;\;\;\;\;\text{ if }  i\in Y_G(k)\text{ and }\;Y_B(k)\neq\emptyset, \\
       0  \text{ if }  i \notin Y(k). \\
        \end{array}
    \right.
\end{equation}

Note that we have summarized the impact of having $i$ or $j$ being a Byzantine node by simply adding a perturbation term $e(k)\in\mathbb{R}$ to the observation. Indeed, without loss of generality, because the Byzantine node is not constrained in the choice of $e(k)$, such additive term captures the impact of the perturbation terms $e_i^1(k)$ and $e_i^2(k)$. From this remark, we can also see that $z_{i,u(k)}(k)$ is not a robust estimate of the gradient, due to the fact that the moment $Y_B(k)\neq \emptyset$, the observation $z_{i,u(k)}(k)$ can be any value in $\mathbb{R}$.

\subsection{Secure gradient estimation}
We will now construct a robust estimate of the gradient, in a distributed manner, using the theory developed in \cite{fawzi2011secure}. Let us first introduce the following assumption over $f(x)$.

\textbf{Assumption A:} It exists a function $v:\mathbb{R}^n\rightarrow \mathbb{R}^m$, such that the function $f:\mathbb{R}^n\rightarrow \mathbb{R}$ satisfies:
\begin{equation*}
    \nabla f(\mathbf{x})=Av(x),\;\forall x.
\end{equation*}
We assume that  $v:\mathbb{R}^n\rightarrow \mathbb{R}^m$  is a continuous function and $A$ is a $n\times m$ real matrix. We also assume that $A$ is known by every node. 

To recast our problem in the form described in \cite{fawzi2011secure}, we rewrite the observations as the output of a linear system. For a given $x$ the vector $\overline{z}(x):=[[\overline{z}_{i,u}(x)]_{1\leq i\leq I}]_{u\in\mathcal{U}}$, gives the gradient observation for the different $u$. The vector $v(x):=[v_{m'}(x)]_{1\leq m'\leq m}$ is the vector that we are interested in learning for every $x$. Note that the relationship between $\overline{z}(x)$ and $v(x)$ is captured by the following linear system $\overline{z}(x)=A_1v(x)+e$, where: the matrix $A_1$ is equal to:
\begin{equation*}
    A_1:=\begin{bmatrix}
 A(u_1) \\
  A(u_2) \\
 \vdots \\
A(u_{\mid \mathcal{U}\mid}) 
\end{bmatrix},
\end{equation*}
with $A(u)=u \otimes A$ where $\otimes$ denotes entrywise product. Using the entrywise product will ensure that the $i$-th line of $A(u)$ is equal to the null line vector if $u_i=0$, meaning that node $i$ is not active in this case. (2) The vector $e:=[e_u\mathbf{1}^T]_{u\in\mathcal{U}}$ captures the errors injected by the Byzantine nodes, where $e_u\neq 0$ if and only if $Y_B=\emptyset$ and where $\mathbf{1}$ is the all-ones vector of dimension $n$. 

Let us adapt the main result (proposition 6 of \cite{fawzi2011secure}), for the secure estimation of $v(x)$. Note that the next theorem is based on the mean of the observations and not the real observations. We then design an algorithm based on the observations which leverages the next theorem and ensures a safe estimation of the gradient.

\begin{theorem}\label{thm:fawzi}

For a given $x$, let us define solution $v^*(x)$ to be the solution of the following optimization problem:
\begin{equation}\label{eq:decoder}
    \min_{v\in\mathbb{R}^m}J(v):=\sum_{u\in\mathcal{U}}\sum_{i=1}^n|\overline{z}_{i,u}(x)-A_{i}(u)v|,
\end{equation}
where $A_{i}(u)$ is the $i$-th row of $A(u)$, $\overline{z}_{i}(x)=[\overline{z}_{i,u}(x)]_{u\in \mathcal{U}}$.
If, for all $\mathcal{K}\subset\{1,\ldots,|\mathcal{U}|n\}$, such that $|\mathcal{K}|=q$, with 
\begin{equation}\label{eq:nec_suf condition}
\sum_{k\in \mathcal{K}}|A_{k-n\lfloor k/n\rfloor+1}(u_{\lfloor k/n\rfloor+1})z|\leq\sum_{k\in \mathcal{K}^c}|A_{k-n\lfloor k/n\rfloor}(u_{\lfloor k/n\rfloor+1})z|,\end{equation} for all $z\in\mathbb{R}^{m}- \{0\}$, then, when $\delta \rightarrow 0$,  $v^*(x)=v(x)$ when $|\mathcal{U}_B|=\lfloor q/n \rfloor$.
\end{theorem}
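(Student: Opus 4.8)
The plan is to reduce the statement to the classical analysis of $\ell_1$-decoding (basis pursuit) for sparse error correction, applied to the linear measurement model that survives in the limit $\delta\to 0$. First I would combine Assumption~A with the Taylor expansion already recorded for $\overline z_{i,u}(x)$ to write the stacked observation vector as $\overline z(x)=A_1 v(x)+e+r(\delta)$, where $A_1$ is the matrix defined above, $e=[e_u\mathbf 1^T]_{u\in\mathcal U}$ is the Byzantine error vector, and $r(\delta)$ gathers the $O(\delta\|\nabla^2 f(x)\|)$ remainders; here I use that $A_i(u)v(x)=\frac{\partial f}{\partial x_i}(x)$ when $u_i=1$ and $=0$ when $u_i=0$, which is exactly the structure of $\overline z_{i,u}(x)$ once the error and remainder terms are removed. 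By construction $e$ is supported on the union over $u\in\mathcal U_B$ of the $n$ coordinates of block $u$, so $e$ has at most $n\,|\mathcal U_B|=n\lfloor q/n\rfloor\le q$ nonzero entries when $|\mathcal U_B|=\lfloor q/n\rfloor$. Since $r(\delta)\to 0$ as $\delta\to 0$, it then suffices to show that for the exact system $y=A_1 v^\star+e$ with at most $q$ corrupted coordinates, the vector $v^\star$ minimizes $J(v)=\|y-A_1 v\|_1$, the objective in \eqref{eq:decoder}.

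For that I would run the standard perturbation argument. Fix $h=v-v^\star\neq 0$, set $g:=A_1 h$ and let $S:=\operatorname{supp}(e)$, so $|S|\le q$. Then
\begin{equation*}
J(v)=\|e-g\|_1=\sum_{j\in S}|e_j-g_j|+\sum_{j\notin S}|g_j|\ \ge\ \|e\|_1-\sum_{j\in S}|g_j|+\sum_{j\notin S}|g_j|\ =\ J(v^\star)+\Big(\sum_{j\notin S}|g_j|-\sum_{j\in S}|g_j|\Big).
\end{equation*}
Extending $S$ to any index set $\mathcal K$ with $|\mathcal K|=q$, so that $S\subseteq\mathcal K$ and $\mathcal K^c\subseteq S^c$, the chain $\sum_{j\in S}|g_j|\le\sum_{j\in\mathcal K}|g_j|\le\sum_{j\in\mathcal K^c}|g_j|\le\sum_{j\notin S}|g_j|$ --- whose middle inequality is precisely condition \eqref{eq:nec_suf condition} evaluated at $z=h$, once the flattened index $k$ is matched to the pair $(i,u)$ so that its summands become the $|g_j|$'s above --- gives $J(v)\ge J(v^\star)$. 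Hence $v(x)=v^\star$ minimizes \eqref{eq:decoder}; if \eqref{eq:nec_suf condition} holds with strict inequality for every $h\notin\ker A_1$ (the reading that makes it "necessary and sufficient"), then the middle inequality is strict, the minimizer is unique modulo $\ker A_1$, and $v^*(x)=v(x)$.

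Two points require care, and the second is the genuine obstacle. The first is pure bookkeeping: one must check that the decoding of the flat index $k\in\{1,\dots,|\mathcal U|n\}$ via $\lfloor k/n\rfloor$ and $k-n\lfloor k/n\rfloor$ indeed realizes the bijection with the pairs $(i,u)$ used above, and that each $u\in\mathcal U_B$ contaminates at most one full block of $n$ coordinates, so that the bound $n\lfloor q/n\rfloor\le q$ is valid. The second is the limit $\delta\to 0$ itself: $v^*(x)$ is defined from the true nonlinear observations through $J_\delta(v):=\|e-A_1(v-v^\star)+r(\delta)\|_1$, so one must argue that the minimizers of $J_\delta$ converge to the (unique) minimizer of the limiting problem. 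This follows from uniform convergence $J_\delta\to J_0$ on compact sets together with coercivity of $J_0$ on $(\ker A_1)^\perp$ and uniqueness of its minimizer --- which is exactly why the strict form of \eqref{eq:nec_suf condition} (and, if $v^*(x)=v(x)$ is to hold literally rather than only up to $\ker A_1$, injectivity of $A_1$) cannot be dropped. Turning this convergence statement into a clean argument, rather than the algebraic decoding step, is where the real work lies.
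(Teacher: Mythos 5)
Your proof is correct and, in substance, it reconstructs exactly the argument that the paper outsources: the paper's entire proof of this theorem is the single line ``Directly adapted from proposition 6 in \cite{fawzi2011secure}'', and what you have written out --- the reduction to the linear model $\overline z = A_1 v + e + r(\delta)$ with $e$ supported on at most $q$ coordinates, followed by the standard $\ell_1$ error-correction perturbation bound $\|e-g\|_1 \ge \|e\|_1 + \sum_{j\notin S}|g_j| - \sum_{j\in S}|g_j|$ and the extension of $S$ to a set $\mathcal K$ of cardinality $q$ --- is precisely the mechanism behind that cited proposition. So there is no methodological divergence; you have simply supplied the content the paper leaves implicit. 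Beyond that, you correctly identify two points the paper glosses over: with the non-strict inequality \eqref{eq:nec_suf condition} as written one only gets that $v(x)$ is \emph{a} minimizer of \eqref{eq:decoder}, not \emph{the} minimizer (uniqueness needs the strict version plus injectivity of $A_1$ on the relevant subspace), and the statement ``when $\delta\to 0$, $v^*(x)=v(x)$'' tacitly requires convergence of the minimizers of the perturbed objective $J_\delta$ to the minimizer of the limit objective, which is where coercivity and uniqueness actually get used. These are genuine imprecisions in the theorem statement rather than gaps in your argument; your proof is, if anything, more careful than the source it is being compared against.
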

\begin{proof}
Directly adapted from proposition 6 in \cite{fawzi2011secure}.
\end{proof}
The previous theorem indicates us that as long as \eqref{eq:nec_suf condition} is satisfied, we can use solution \eqref{eq:decoder} to retrieve the gradient. Simpler sufficient conditions have been mentioned in \cite{fawzi2011secure}.  

\section{Algorithm and numerical study}\label{sec:algo_num}

We now describe the algorithm for the non-Byzantine nodes. It can be summarized as follows: At every instant $k$, a subset of nodes perturbate the black-box model, obtaining an estimator of their respective partial derivative. Once the active nodes observe the perturbated function, they broadcast it to all the nodes. Using these observations, each node estimates the gradient based on \eqref{eq:decoder}.

\begin{framed}
\textbf{Decentralized Byzantine Gradient Estimation}

\textit{Initialization: } Input, $x$, the working point. \\ 
For each round $k=1,2,\ldots$:
\begin{itemize}
\item \textit{Perturbation Step:} For every node $i\in Y_G(k)$:
\begin{enumerate}
    \item Sample $\Delta_i(k)\in\{-1,1\}$ from a Bernoulli distribution of parameter $1/2$,
    \item Play $x_i+\delta \Delta_i(k)$.
    \item Observe $z_{i,u(k)}$ and $u(k)$ and broadcast $z_{i,u(k)}$ to every node.
\end{enumerate}
    \item \textit{Decoding step}: Update
    \begin{eqnarray*}
        \hat z_{i,u}^{k+1}&=&\hat z_{i,u}^k+b(k)\mathbb{1}_{u,u(k)}(z_{i,u(k)}-\hat z_{i,u}^k ),\;\forall i\in\mathcal{I},\forall u\in\mathcal{U},\\
      v^{k+1}&=&v^k+a(k) \sum_{u,i}A_{i}(u)^T\text{sign}(\hat{z}_{i,u}(k)-A_{i}(u)v^k),
    \end{eqnarray*}
    where $v^k:=[v_{m'}^k]_{1\leq m'\leq m}$, $a(k)$ and $b(k)$ are such that $\sum_k a(k)=\sum_k b(k)=\infty$, $\sum_k(a(k)^2+b(k)^2)<\infty$ and $\lim \frac{a(k)}{b(k)}\rightarrow 0$.
    \item \textit{Gradient estimator:} Use $A_1v^k$ as an estimator of $\nabla f(x)$. 
\end{itemize}
\end{framed}

The convergence of the algorithm towards  $v^*(x)$ comes from the fact that it can be seen as  an asynchronous two-time scale stochastic approximation, where on the fast time-scale every node runs a moving average on $\hat z_{i,u}(k)$ for every $k$ and every $i$ and on the slow time-scale, a gradient descent is used to solve, in an online fashion \eqref{eq:decoder}.  Convergence can be proved by using the  ordinary differential equation approach (see Ch. 6 and 7 in \cite{borkar2009stochastic}).

For our illustrative numerical study, we assume $f(x)=\frac{1}{C-\sum_i^n x_i}$, where $C$ is a given constant modelling, for instance, node's capacity. We consider 6 nodes, and different scenarios for nodes' activation and the presence of Byzantine nodes. Fig. \ref{fig} (top) illustrates the convergence of the perturbation scheme to $\nabla f(x)$ when no Byzantine nodes are present, (left) when one node perturbates the network at a time  (a.k.a. single perturbation), and (right) when all possible partitions of $\mathcal{I}$ are considered for nodes' activation (a.k.a. simultaneous perturbation). Fig. \ref{fig} (bottom) shows the results for same 6 nodes, but among which 2 are Byzantine, for the single-perturbation case (left) and simultaneous perturbation (right). The results over 10 simulations and the average are shown. The algorithm approaches the theoretical value of the gradient, while activation scheme  and presence of Byzantine nodes have an incidence in the results.

\begin{figure}[t]
\centering
\includegraphics[width=0.46\columnwidth]{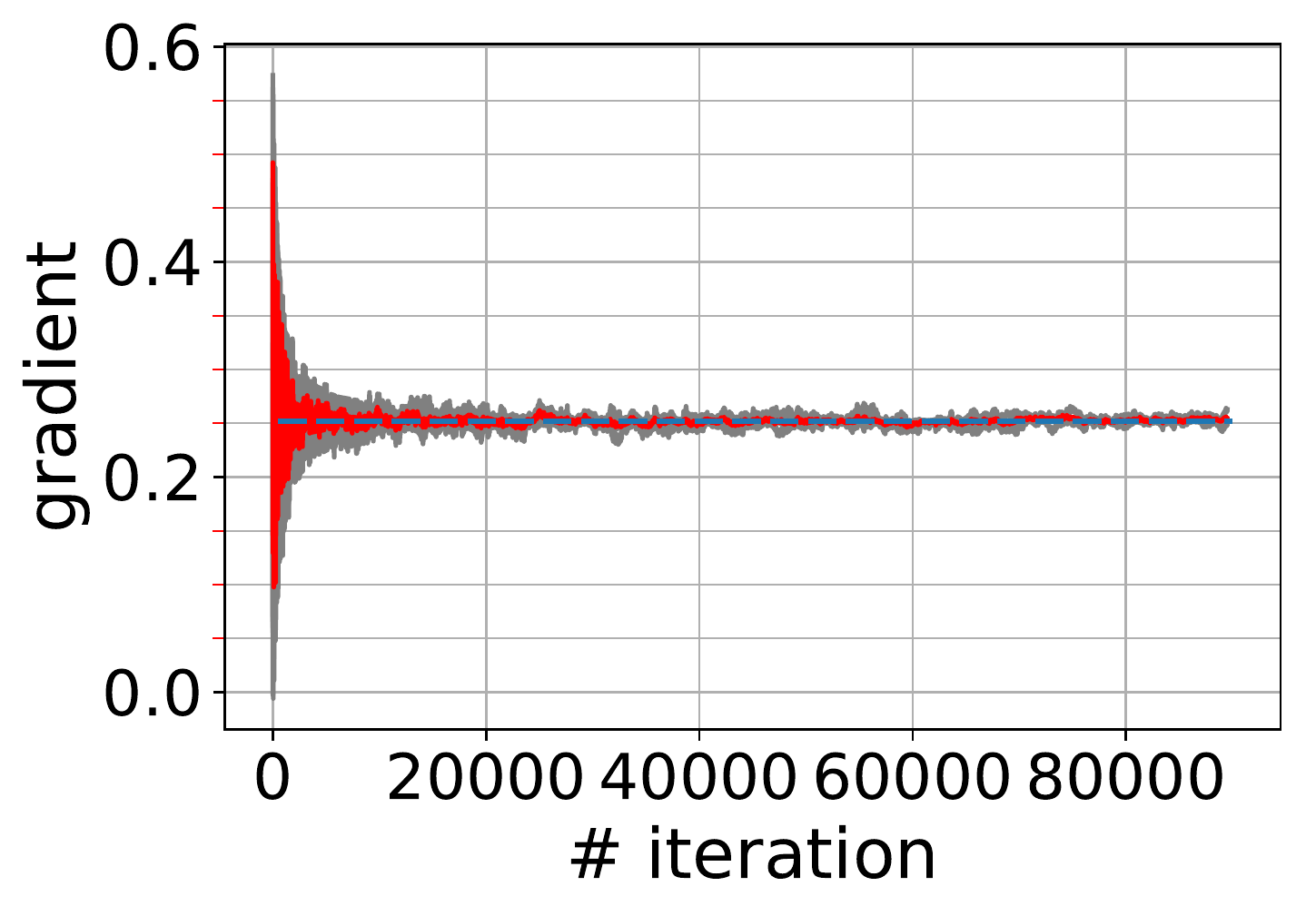}
\includegraphics[width=0.5\columnwidth]{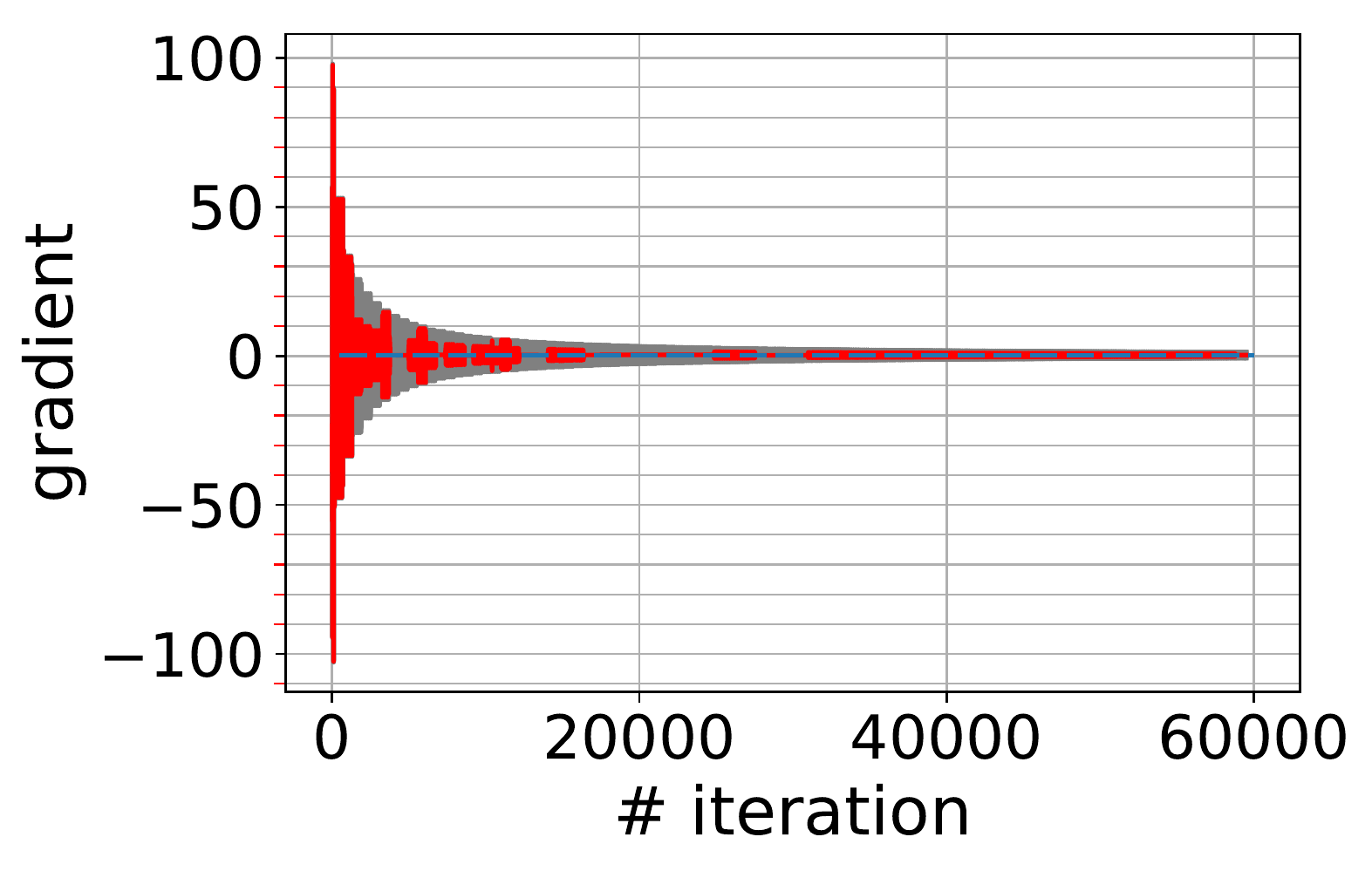}\\
\includegraphics[width=0.46\columnwidth]{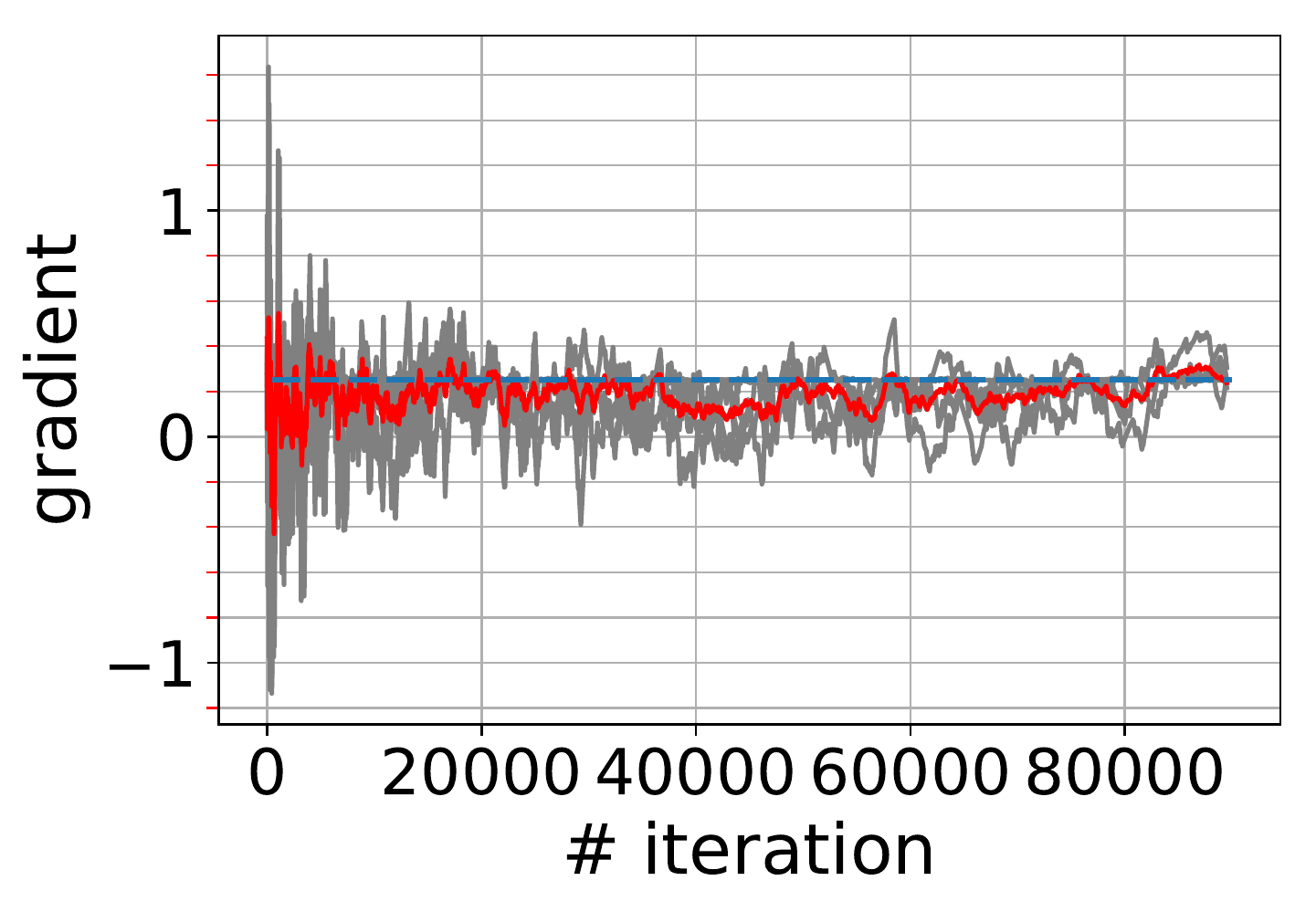}\
\includegraphics[width=0.5\columnwidth]{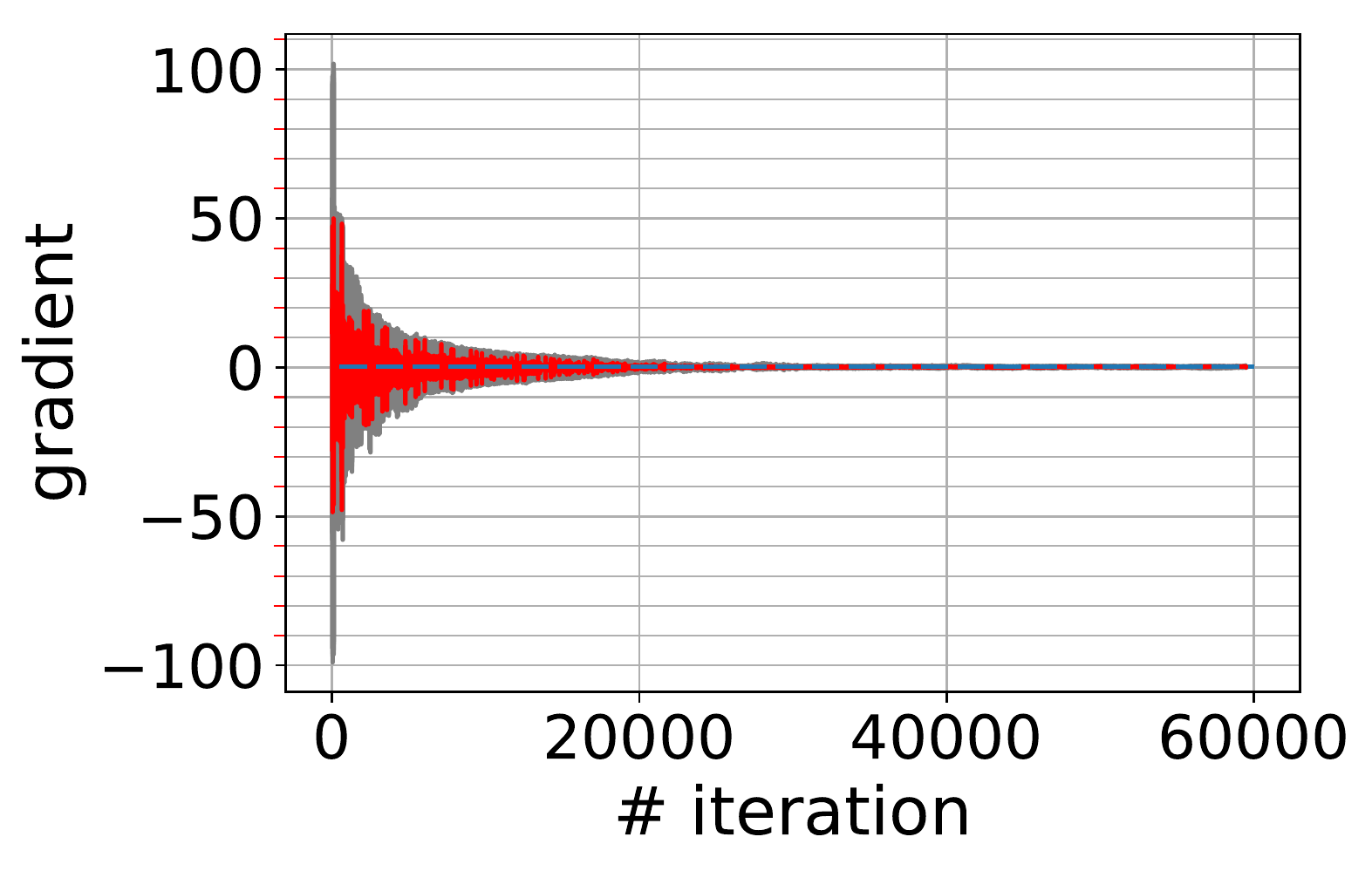}
\caption{Gradient estimation for 6 nodes. No Byzantine nodes (top) and 2 Byzantine nodes (bottom), with single (left) and simultaneous activation (right).}
\label{fig}
\end{figure}

\section{Conclusion}\label{sec:conclusion}
We have proposed a decentralized algorithm to estimate the gradient of a function describing a black-box model. Such algorithm is robust to the presence of Byzantine nodes  and takes into account their asynchronous behaviour. The solution is based on a two-timescale stochastic approximation and secure estimation. 
Future work will analyse convergence speed and optimize nodes activation.

\bibliographystyle{splncs04}
\bibliography{references.bib}
\end{document}